
\typeout{IJCAI-19 Instructions for Authors}


\documentclass{article}
\pdfpagewidth=8.5in
\pdfpageheight=11in
\usepackage{ijcai19}

\usepackage{times}
\usepackage{soul}
\usepackage{url}
\usepackage[hidelinks]{hyperref}
\usepackage[utf8]{inputenc}
\usepackage[small]{caption}
\usepackage{graphicx}
\usepackage{amsmath}
\usepackage{booktabs}
\usepackage{algorithm}
\usepackage{algorithmic}
\usepackage{subfigure}
\urlstyle{same}

\usepackage{amsfonts}
\usepackage{amsthm}
\usepackage{mathtools}
\usepackage{color}
\usepackage{comment}

\usepackage{eqparbox}

\title{Reparameterizable Subset Sampling via Continuous Relaxations}

\author{
Sang Michael Xie
\textnormal{and}
Stefano Ermon\\
\affiliations
Stanford University\\
\emails
\{xie, ermon\}@cs.stanford.edu
}

\newtheorem{prop}{Proposition}
\newtheorem{theorem}{Theorem}
\begin{document}

\maketitle
\frenchspacing
\begin{abstract}
Many machine learning tasks require sampling a subset of items from a collection based on a parameterized distribution.  The Gumbel-softmax trick can be used to sample a single item, and allows for low-variance reparameterized gradients with respect to the parameters of the underlying distribution. However, stochastic optimization involving subset sampling is typically not reparameterizable. To overcome this limitation, we define a continuous relaxation of subset sampling that provides reparameterization gradients by generalizing the Gumbel-max trick. We use this approach to sample subsets of features in an instance-wise feature selection task for model interpretability, subsets of neighbors to implement a deep stochastic k-nearest neighbors model, and sub-sequences of neighbors to implement parametric t-SNE by directly comparing the identities of local neighbors. We improve performance in all these tasks by incorporating subset sampling in end-to-end training.
\end{abstract}

\section{Introduction}
Sampling a single item from a collection is common in machine learning problems such as generative modeling with latent categorical variables, attention mechanisms~\cite{KingmaSSL,XuShowAttend}.
These tasks involve optimizing an expectation objective over a latent categorical distribution parameterized by a neural network.
Score-based methods such as REINFORCE \cite{Williamsreinforce} for estimating the gradient of such objectives typically have high variance.
The reparameterization trick \cite{kingma2013autoencoding} allows for low variance gradients for certain distributions, not typically including categorical distributions.
The Gumbel-softmax trick \cite{janggumbelsoftmax} or Concrete distribution \cite{maddison2017concrete} are continuous relaxations that allow for reparameterized gradients with respect to the parameters of the distribution. 
Among many others, this enabled generative modeling with latent categorical variables without costly marginalization and modeling sequences of discrete elements with GANs \cite{janggumbelsoftmax,KusnerDiscreteGAN}.

In this paper, we consider the more general problem of sampling a subset of \emph{multiple items} from a collection without replacement.
As an example, choosing a subset is important in instance-wise feature selection \cite{chen18L2X}, where the goal is to select a subset of features that best explain the model's output for each example.
Sampling subsets of neighbors also enables implementing stochastic $k$-nearest neighbors end-to-end with deep features.
Stochastic optimization involving subset sampling, however, does not typically have relaxations with low-variance reparameterization gradients as in Gumbel-softmax.
To overcome this limitation, we develop a continuous relaxation for approximate reparameterized gradients with respect to the parameters of a subset distribution to enable learning with backpropagation.
In our setting, the Gumbel-max trick (and thus Gumbel-softmax) is not directly applicable since it requires treating every possible subset as a category, requiring a combinatorial number of categories.
We use an extension to the Gumbel-max trick which perturbs the log-probabilities of a categorical distribution with Gumbel noise and takes the top-$k$ elements to produce samples without replacement. Ignoring ordering in these samples allows for sampling from a family of subset distributions using the same algorithm. We give a general algorithm that produces continuous relaxations with reparameterization gradients using top-$k$ relaxations. We then show that a recent top-$k$ relaxation \cite{NN3} can be used in our algorithm and study the consistency of this top-$k$ relaxation.

Our main contributions are the following:
\begin{itemize}
  \item We give an 
  algorithm for a reparameterizable continuous relaxation to sampling subsets
  using top-$k$ relaxations and a extension to the Gumbel-max trick.
  \item We show that the top-$k$ relaxation of \cite{NN3} is \textit{consistent} in the sense that the ordering of inputs is preserved in the output in many practical settings.
  \item We test our algorithm as a drop-in replacement for subset selection routines in explaining deep models through feature subset selection, training stochastic neural k-nearest neighbors, and implementing parametric t-SNE without Student-t distributions by directly comparing neighbor samples. We improve performance on all tasks using the same architectures and metrics as the original~\footnote{Code available at \url{https://github.com/ermongroup/subsets}.}.
\end{itemize}

\section{Preliminaries}

\subsection{Weighted Reservoir Sampling}
\label{wrssection}
Reservoir sampling is a family of streaming algorithms that is used to sample $k$ items from a collection of $n$ items, $x_1,\dots,x_n$, where $n$ may be infinite \cite{Vitterreservoir}. We consider finite $n$ and only produce samples after processing the entire stream. In weighted reservoir sampling, every $x_i$ is associated with a weight $w_i\geq 0$. Let $\mathbf{w}=[w_1,\dots,w_n]$ and $Z=\sum_{i=1}^n w_i$ be the normalizing constant. 
Let $\mathbf{e}^j=[e^j_1,\dots,e^j_n] = [ 0, \cdots, 0, 1, 0, \cdots, 0]\in \{0,1\}^n$ be a 1-hot vector, i.e., a vector with only one nonzero element at index $j$, where $e^j_j = 1$. We define a weighted reservoir sample (WRS) as $S_{wrs}=[\mathbf{e}^{i_1},\dots, \mathbf{e}^{i_k}]$, a sequence of $k$ 1-hot (standard basis) vectors where $\mathbf{e}^{i_j}$ represents selecting element $x_{i_j}$ in the $j$-th sample. 
We wish to sample $S_{wrs}$ from
\begin{equation}
\label{subsetdist}
    p(S_{wrs}\mid \mathbf{w})= \frac{w_{i_1}}{Z} \frac{w_{i_2}}{Z-w_{i_1}}\cdots \frac{w_{i_k}}{Z-\sum_{j=1}^{k-1} w_{i_j}},
\end{equation}
which corresponds to sampling without replacement with probabilities proportional to item weights. Modeling samples without replacement allows for sampling a sequence of distinct items. For $k=1$, $p(S_{wrs}|\mathbf{w})$ is the standard softmax distribution with logits given by $\log(w_i)$.

\begin{algorithm}[tb]
\caption{Weighted Reservoir Sampling (non-streaming)}
\label{weightedreservoirsample}
\begin{algorithmic}[1]
\REQUIRE Items $x_1, \dots, x_n$, weights $\mathbf{w}=[w_1,\dots,w_n]$, reservoir size $k$
\ENSURE $S_{wrs}=[\mathbf{e}^{i_1},\dots, \mathbf{e}^{i_k}]$ a sample from $p(S_{wrs}|\mathbf{w})$
\STATE $\mathbf{r}\gets$ [ ]
\FOR  {$i\gets1$ to $n$}
    \STATE $u_i\gets$ Uniform(0, 1)
    \STATE $r_i\gets u_i^{1/w_i}$ \COMMENT{Sample random keys}
    \STATE $\mathbf{r}$.append($r_i$)
\ENDFOR
\STATE $[\mathbf{e}^{i_1},\dots, \mathbf{e}^{i_k}]\gets$ TopK($\mathbf{r}$, $k$)
\RETURN $[\mathbf{e}^{i_1},\dots, \mathbf{e}^{i_k}]$
\end{algorithmic}
\label{wrs}
\end{algorithm}

\cite{EfraimidisReservoir} give an algorithm for weighted reservoir sampling (Algorithm \ref{wrs}). Each item $x_i$ is given a random key $r_i=u_i^{1/w_i}$ where $u_i$ is drawn from a uniform distribution between $[0, 1]$ and $w_i$ is the weight of item $x_i$. Let the top $k$ keys over the $n$ items be $r_{i_1}, \dots, r_{i_k}$.
We define the function $\text{TopK}(\mathbf{r},k)$ which takes keys $\mathbf{r}=[r_1,\dots,r_n]$ and returns $[\mathbf{e}^{i_1},\dots, \mathbf{e}^{i_k}]$ associated with the top-$k$ keys.
The algorithm uses $\text{TopK}$ to return the items $S_{wrs}=[\mathbf{e}^{i_1},\dots, \mathbf{e}^{i_k}]$ as the WRS. 
\citeauthor{EfraimidisReservoir} proved (Proposition 5 in \cite{EfraimidisReservoir}) that the output of Algorithm \ref{wrs} is distributed according to $p(S_{wrs}|\mathbf{w})$.

\subsection{Gumbel-max Trick}
Given $\mathbf{w}$ as in (\ref{subsetdist}), $\log(w_i)$ are logits for a softmax distribution $p(x_i|\mathbf{w})=w_i / Z$. The Gumbel-max trick \cite{YELLOTT1977109} generates random keys $\hat{r}_i=\log(w_i)+g_i$ by perturbing logits with Gumbel noise $g_i \sim \text{Gumbel}(0,1)$, then taking $x_{i^*}$ such that $i^*=\arg\max_{i}\hat{r}_i$ as a sample. These samples are distributed according to $p(x_i|\mathbf{w})=w_i / Z$.
The idea is to reparameterize the sample as a deterministic transformation of the parameters $\mathbf{w}$ and some independent noise $g_i$. Then by relaxing the deterministic transformation (from \emph{max} to \emph{softmax}), the Gumbel-softmax trick allows for training with backpropagation \cite{maddison2017concrete,janggumbelsoftmax}.
Similarly, we use an extension of the Gumbel-max trick to decouple the deterministic transformation of the parameters (in our case, a top-$k$ selection function) and the randomness (Gumbel noise $g_i$), and we relax the top-$k$ function to allow for backpropagation.

\section{Reparameterizable Continuous Relaxation for Subset Sampling}

\subsection{Setup}
We represent a subset $S\in \{0,1\}^n$ as a $k$-hot vector, which is a vector with exactly $k$ nonzero elements that are all equal to 1. We define the probability of a subset $S$ as the sum of the probabilities of all WRS with the same elements  
\begin{equation}
    \label{subsetdist2}
    p(S \mid \mathbf{w}) = \sum_{S_{wrs}\in \Pi(S)}p(S_{wrs}|\mathbf{w})
\end{equation}
where $\Pi(S)=\{S_{wrs}:S=\sum_{j=1}^k S_{wrs}[j]\}$ is the set of all permutations of elements in $S$ represented by sequences of 1-hot vectors. Here, $S_{wrs}[j]$ is the $j$-th 1-hot vector in the sequence.
By simply ignoring the order of elements in a WRS, we can also sample from $p(S|\mathbf{w})$ using the same algorithm. 
Note that this is a restricted family of subset distributions.
Since each distribution is over $\binom{n}{k}$ subsets of size $k$, the full space of distributions requires $\binom{n}{k}-1$ free parameters. Here, we reduce the number of free parameters to $n-1$.
While this is a restriction, we gain tractability in our algorithm.

\subsection{Gumbel-max Extension}
We extend the Gumbel-max trick to sample from $p(S|\mathbf{w})$. The main intuition is that the outputs of the reservoir sampling Algorithm \ref{wrs} only depend on the \emph{ordering} of the random keys $r_i$ and not their values.
We show that random keys $\hat{r}_i$ generated from the Gumbel-max trick are monotonic transformations of the random keys $r_i$ from Algorithm \ref{wrs}. 
Because a monotonic transformation preserves the ordering, the elements that achieve the top-$k$ Gumbel-max keys $\hat{r}_i$ have the same distribution as the elements that achieve the top-$k$ weighted reservoir sampling keys $r_i$. 
Therefore we can sample from $p(S|\mathbf{w})$ by taking the top-$k$ elements of $\hat{r}_i$ instead. 

To make the procedure differentiable with respect to $\mathbf{w}$, we replace top-$k$ selection with a differentiable approximation. 
We define a relaxed $k$-hot vector $\mathbf{a}=[a_1,\dots,a_n]$ to have $\sum_{i=1}^n a_i = k$ and $0\leq a_i \leq 1$. We relax $S_{wrs}$ by replacing all $\mathbf{e}^{i_j}$ with relaxed 1-hot vectors and relax $S$ by a relaxed $k$-hot vector. Our continuous relaxation will approximate sampling from $p(S|\mathbf{w})$ by returning relaxed $k$-hot vectors.
We use a top-$k$ relaxation RelaxedTopK, which is a \emph{differentiable} function that takes $\hat{r}_i$, $k$, and a temperature parameter $t>0$ and returns a relaxed $k$-hot vector $\mathbf{a}$ such that as $t\rightarrow 0$, $\text{RelaxedTopK}(\mathbf{\hat{r}},k,t)\rightarrow \sum_{j=1}^k \text{TopK}(\mathbf{\hat{r}},k)[j]$, 
where $\text{TopK}(\mathbf{\hat{r}},k)[j]=\mathbf{e}^{i_j}$ is the 1-hot vector associated with the $j$-th top key in $\mathbf{\hat{r}}$.
Thus Algorithm \ref{diffsubsample} produces relaxed $k$-hot samples that, as $t\rightarrow 0$ converge to exact samples from $p(S|\mathbf{w})$. 
Note that we can also produce approximate samples from $p(S_{wrs}|\mathbf{w})$ if an intermediate output of RelaxedTopK is a sequence of $k$ relaxed 1-hot vectors $[\mathbf{a}^{i_1},\dots,\mathbf{a}^{i_k}]$ such that as $t\rightarrow 0$, $[\mathbf{a}^{i_1},\dots,\mathbf{a}^{i_k}]\rightarrow \text{TopK}(\mathbf{\hat{r}},k)$.
This means that the intermediate output converges to a WRS.
\begin{algorithm}[tb]
\caption{Relaxed Subset Sampling}
\label{diffsubsample}
\begin{algorithmic}[1]
\REQUIRE Items $x_1, \dots, x_n$, weights $\mathbf{w}=[w_1,\dots,w_n]$, subset size $k$, temperature $t>0$
\ENSURE Relaxed $k$-hot vector $\mathbf{a}=[a_1,\dots, a_n]$, where $\sum_{i=1}^n a_i = k$, $0\leq a_i\leq 1$

\STATE $\mathbf{\hat{r}}\gets$ [ ]
\FOR  {$i\gets1$ to $n$}
    \STATE $u_i\gets$ Uniform(0, 1) \COMMENT{Random Gumbel keys}
    \STATE $\hat{r}_i\gets -\log(-\log(u_i)) + \log(w_i)$
    \STATE $\mathbf{\hat{r}}$.append($\hat{r}_i$)
\ENDFOR
\STATE $\mathbf{a}\gets$ RelaxedTopK($\mathbf{\hat{r}}$, $k$, $t$)\\
\RETURN $\mathbf{a}$
\end{algorithmic}
\end{algorithm}

\begin{prop}
\label{exactsamples}
Let RelaxedTopK be defined as above. Given $n$ items $x_1,\dots,x_n$, a subset size $k$, and a distribution over subsets described by weights $w_1,\dots,w_n$, Algorithm \ref{diffsubsample} gives exact samples from $p(S|\mathbf{w})$ as in (\ref{subsetdist2}) as $t\rightarrow 0$.
\end{prop}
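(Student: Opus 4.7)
The plan is to reduce the $t\to 0$ limit of Algorithm~\ref{diffsubsample} to the exact (non-relaxed) weighted reservoir sampler (Algorithm~\ref{wrs}) via an order-preserving change of variables, and then invoke Proposition~5 of \cite{EfraimidisReservoir} together with the definition of $p(S\mid\mathbf{w})$ in equation~(\ref{subsetdist2}).

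First, I would use the defining property of RelaxedTopK. Since RelaxedTopK$(\hat{\mathbf{r}},k,t)\to\sum_{j=1}^{k}\mathrm{TopK}(\hat{\mathbf{r}},k)[j]$ as $t\to 0$, the limiting output of Algorithm~\ref{diffsubsample} is the $k$-hot indicator of the set of indices achieving the $k$ largest values of $\hat{\mathbf{r}}$. Because the $\hat{r}_i$ are independent continuous random variables, ties occur only on a probability-zero event, so this convergence holds almost surely in the random keys $u_i$, which is enough to conclude convergence in distribution of the output.

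Next, I would couple Algorithm~\ref{diffsubsample} to Algorithm~\ref{wrs} using the same $u_i\sim\mathrm{Uniform}(0,1)$ and verify the monotonic relationship between the two sets of keys. A direct calculation gives
\begin{equation*}
\hat{r}_i \;=\; -\log(-\log u_i) + \log w_i \;=\; -\log\!\left(\tfrac{-\log u_i}{w_i}\right) \;=\; -\log(-\log r_i),
\end{equation*}
where $r_i = u_i^{1/w_i}$ is the Efraimidis key. On $(0,1)$, the map $r\mapsto -\log(-\log r)$ is strictly increasing, so $\hat{r}_i$ and $r_i$ induce the same ordering of the indices. Consequently, the top-$k$ indices selected in the $t\to 0$ limit of Algorithm~\ref{diffsubsample} coincide exactly with the indices selected by Algorithm~\ref{wrs}.

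Finally, by Proposition~5 of \cite{EfraimidisReservoir}, the ordered top-$k$ indices returned by Algorithm~\ref{wrs} form a weighted reservoir sample distributed according to $p(S_{wrs}\mid\mathbf{w})$ of equation~(\ref{subsetdist}). Since Algorithm~\ref{diffsubsample} discards ordering by summing the top-$k$ one-hot vectors into a single $k$-hot indicator, the resulting subset has distribution $\sum_{S_{wrs}\in\Pi(S)}p(S_{wrs}\mid\mathbf{w}) = p(S\mid\mathbf{w})$, matching equation~(\ref{subsetdist2}). The main subtlety—rather than a hard obstacle—is making precise the sense in which the relaxed output ``gives exact samples as $t\to 0$'': I would state it as almost-sure convergence of the relaxed $k$-hot vector to an exact $k$-hot sample from $p(S\mid\mathbf{w})$ under the coupling above, which automatically yields convergence in distribution.
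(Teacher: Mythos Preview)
Your proposal is correct and follows essentially the same route as the paper's own proof: both arguments couple Algorithms~\ref{wrs} and~\ref{diffsubsample} through the identity $\hat r_i=-\log(-\log r_i)$, use the strict monotonicity of $r\mapsto -\log(-\log r)$ to equate $\mathrm{TopK}(\hat{\mathbf r},k)$ with $\mathrm{TopK}(\mathbf r,k)$, and then invoke \cite{EfraimidisReservoir} together with the $t\to 0$ property of RelaxedTopK. Your version is, if anything, slightly more careful in noting the measure-zero tie event and making the mode of convergence explicit.
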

\begin{proof}
Let the random keys in Algorithm \ref{diffsubsample} be $\mathbf{\hat{r}}$ and the random keys in weighted reservoir sampling be $\mathbf{r}$. For any $i$,
\begin{align*}
    \hat{r}_i &= -\log(-\log(u_i)) + \log(w_i)\\
    &=-\log(-\frac{1}{w_i}\log(u_i))\\
    &=-\log(-\log(u_i^{1/w_i}))=-\log(-\log(r_i)).
\end{align*}
Fixing $u_i$, since $-\log(-\log(a))$ is monotonic in $a$, $\hat{r}_i$ is a monotonic transformation of $r_i$ and $\text{TopK}(\mathbf{\hat{r}},k)=\text{TopK}(\mathbf{r},k)$. 
Let $\text{TopK}(\mathbf{r},k)$ be samples from Algorithm \ref{weightedreservoirsample}. By construction of (\ref{subsetdist2}), $\sum_{j=1}^k\text{TopK}(\mathbf{r},k)[j]$ is distributed as $p(S|\mathbf{w})$. As $t\rightarrow 0$, Algorithm \ref{diffsubsample} produces samples from $p(S|\mathbf{w})$ since $\text{RelaxedTopK}(\mathbf{\hat{r}},k,t)\rightarrow \sum_{j=1}^k\text{TopK}(\mathbf{\hat{r}},k)[j]=\sum_{j=1}^k\text{TopK}(\mathbf{r},k)[j]$.
\end{proof}
This fact has been shown previously in \cite{vieirablog} for $k=1$ and in \cite{KimILP} for sampling from $p(S_{wrs}|\mathbf{w})$ without the connection to reservoir sampling. \citeauthor{kool19astochasticbeams} concurrently developed a similar method for ordered sampling without replacement for stochastic beam search.
Note that Algorithm \ref{diffsubsample} is general to any choice of top-$k$ relaxation. 

\subsection{Differentiable Top-k Procedures}
A vital component of Algorithm \ref{diffsubsample} is a top-$k$ relaxation that is \emph{differentiable} with respect to the input keys $\hat{r}_i$ (a random function of $\mathbf{w}$). This allows for parameterizing $\mathbf{w}$, which governs $p(S|\mathbf{w})$, using neural networks and training using backpropagation.
We propose to use a recent top-$k$ relaxation based on successive applications of the softmax function \cite{NN3}. For some temperature $t>0$, define for all $i=1,\dots,n$
\begin{align}
    \alpha_i^1 \coloneqq \hat{r}_i, \enspace \alpha_i^{j+1} \coloneqq \alpha_i^j + \log(1-a_i^j)
\end{align}
where $a_i^j$ is a sample at step $j$ from the distribution
\begin{align}
    p(a_i^j = 1)=\frac{\exp(\alpha_i^j/t)}{\sum_{m=1}^n \exp(\alpha_i^j/t)}
\end{align}
for $j=1,\dots,k$ steps.
In the relaxation, the $a_i^j$ is replaced with its expectation, $p(a_i^j = 1)$, such that the new update is 
\begin{align}
    \alpha_i^{j+1} \coloneqq \alpha_i^j + \log(1-p(a_i^j = 1))
\end{align}
and the output is $a_i=\sum_{j=1}^k p(a_i^j = 1)$ for each $i$.
Let $\mathbf{a}^j=[p(a_1^j=1),\dots,p(a_n^j=1)]$ be relaxed 1-hot outputs at step $j$ and $\mathbf{a}=\sum_{j=1}^k\mathbf{a}^j$ be the relaxed $k$-hot output.
\citeauthor{NN3} show that as $t\rightarrow 0$, $[\mathbf{a}^1,\dots,\mathbf{a}^k]\rightarrow \text{TopK}(\mathbf{\hat{r}},k)$ and thus $\mathbf{a}\rightarrow \sum_{j=1}^k\text{TopK}(\mathbf{\hat{r}},k)[j]$ so that this is a valid RelaxedTopK. Thus the relaxation can be used for approximately sampling from both $p(S|\mathbf{w})$ and $p(S_{wrs}|\mathbf{w})$.
Next we show that the magnitude of values in the output relaxed $k$-hot vector $\mathbf{a}$ preserves order of input keys $\hat{r}_i$ for $t\geq 1$.
\begin{theorem}
\label{consistency}
Given keys $\mathbf{\hat{r}}$, the top-$k$ relaxation of \cite{NN3} produces a relaxed $k$-hot vector $\mathbf{a}$ where if $\hat{r}_i\leq \hat{r}_j$, then $a_i\leq a_j$ for any temperature $t\geq1$ and $k\leq n$.
\end{theorem}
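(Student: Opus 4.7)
The plan is to prove the theorem by induction on the step index $m$ of the relaxation, showing that the ordering $\hat{r}_i \leq \hat{r}_j$ propagates to every intermediate softmax probability, i.e.\ $p(a_i^m = 1) \leq p(a_j^m = 1)$ for all $m = 1, \ldots, k$. Summing over $m$ then yields $a_i \leq a_j$. It is cleanest to work in the exponentiated parametrization $q_i^m := \exp(\alpha_i^m/t)$, in which the recurrence reads $q_i^{m+1} = q_i^m(1 - q_i^m/Z^m)^{1/t}$ with $Z^m := \sum_n q_n^m$, and $p(a_i^m = 1) = q_i^m/Z^m$. The base case is immediate since $q_i^1$ is a monotone function of $\hat{r}_i$.

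The inductive step will reduce to a single technical lemma: for $t \geq 1$, $Z > 0$, and nonnegative reals with $q_i \leq q_j$ and $q_i + q_j \leq Z$, one has $q_i(1 - q_i/Z)^{1/t} \leq q_j(1 - q_j/Z)^{1/t}$. In our setting the constraint $q_i^m + q_j^m \leq Z^m$ is automatic because all other $q_n^m$ are nonnegative. To prove the lemma I would raise both sides to the $t$-th power and substitute $s := q/Z$, reducing to $F(s_i) \leq F(s_j)$ for $F(s) := s^t(1-s)$ subject to $s_i + s_j \leq 1$. A quick derivative computation gives $F'(s) = s^{t-1}(t - (t+1)s)$, so $F$ is increasing on $[0, t/(t+1)] \supseteq [0, 1/2]$ whenever $t \geq 1$. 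If $s_j \leq 1/2$, monotonicity closes the case; if $s_j > 1/2$, the constraint forces $s_i \leq 1 - s_j \leq 1/2$, so $F(s_i) \leq F(1-s_j)$ by monotonicity on $[0, 1/2]$, and the factorization $F(s_j) - F(1-s_j) = s_j(1-s_j)[s_j^{t-1} - (1-s_j)^{t-1}] \geq 0$ (using $s_j \geq 1 - s_j$ and $t - 1 \geq 0$) chains the two inequalities.

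The main obstacle is that the per-step update $q \mapsto q(1 - q/Z)^{1/t}$ is \emph{not} monotone on $[0, Z]$: it peaks at $q = tZ/(t+1)$ and decreases afterwards, so a naive induction on the ordering of $q_i^m$ would not close. The essential resource that makes the argument work is the constraint $q_i^m + q_j^m \leq Z^m$ provided by nonnegativity of the remaining coordinates, which together with $t \geq 1$ places $q_i$ and $Z - q_j$ on the increasing side of the peak and lets the symmetry-style comparison $F(1-s_j) \leq F(s_j)$ salvage the monotonicity we need.
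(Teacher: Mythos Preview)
Your proof is correct and follows the same overall induction scheme as the paper—both show that the ordering of $\alpha_i^m$ (equivalently your $q_i^m=\exp(\alpha_i^m/t)$) is preserved at every step, then sum the softmax outputs. The inductive step, however, is handled quite differently.

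The paper works with $\exp(\alpha^m)$ rather than $\exp(\alpha^m/t)$ and argues via a ratio: writing $c=\sum_{m\neq i,j}\exp(\alpha_m^k/t)\geq 0$, one gets
\[
\frac{\exp(\alpha_j^{k+1})}{\exp(\alpha_i^{k+1})}
=\frac{\exp(\alpha_j^{k})}{\exp(\alpha_i^{k})}\cdot
\frac{\exp(\alpha_i^{k}/t)+c}{\exp(\alpha_j^{k}/t)+c},
\]
then uses the single inequality $\exp(\alpha_j^k-\alpha_i^k)\geq\exp((\alpha_j^k-\alpha_i^k)/t)$ for $t\geq 1$ and a one-line algebraic check that the resulting product is $\geq 1$. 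No calculus, no case split. Your route instead reduces the step to the one-variable lemma $F(s)=s^t(1-s)$, locates its maximizer at $t/(t+1)$, and handles the non-monotone region via the constraint $s_i+s_j\leq 1$ together with the symmetry comparison $F(s_j)\geq F(1-s_j)$. Both proofs ultimately lean on the same structural fact—nonnegativity of the remaining coordinates, which is your $q_i+q_j\leq Z$ and the paper's $c\geq 0$—but the paper's ratio trick sidesteps the need to analyze where the update map peaks. What your version buys is an explicit picture of \emph{why} things break for $t<1$: the peak $t/(t+1)$ drops below $1/2$, so the constraint $s_i+s_j\leq 1$ no longer confines $s_i$ to the increasing side, and the symmetry inequality $s_j^{t-1}\geq(1-s_j)^{t-1}$ reverses.
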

\begin{proof}
By induction on $k$. Fix any $\hat{r}_i\leq \hat{r}_j$. For step $k=1$, we have $a_i^1\leq a_j^1$ since the softmax function preserves ordering. Assuming the statement holds for $0,\dots,k$, we want to show that $\alpha_i^{k+1}\leq \alpha_j^{k+1}$, which suffices to imply $a_i^{k+1}\leq a_j^{k+1}$ by the order-preserving property of softmax.
Define $\hat{\alpha}_i^k=\frac{\exp(\alpha_i^k / t)}{\sum_{m=1}^n \exp(\alpha_i^k / t)}=p(a_i^j=1)$. Then
\begin{align*}
    \alpha_i^{k+1}&=\alpha_i^k+\log\left(1-\hat{\alpha}_i^k\right)\\
    \exp(\alpha_i^{k+1})&=\exp(\alpha_i^k)\left(1-\hat{\alpha}_i^k\right).
\end{align*}
Comparing $\alpha_j^{k+1}$ and $\alpha_i^{k+1}$ through the ratio,
\begin{align*}
    \frac{\exp(\alpha_j^{k+1})}{\exp(\alpha_i^{k+1})} = \frac{\exp(\alpha_j^k)}{\exp(\alpha_i^k)}\left(\frac{\exp(\alpha_i^k/t)+c}{\exp(\alpha_j^k/t)+c}\right)
\end{align*}
where we can view $c=\sum_{m=1}^n\exp(\alpha_m^k/t)-\exp(\alpha_i^k/t)-\exp(\alpha_j^k/t)\geq 0$ as a non-negative constant in this analysis. Note that $\frac{\exp(\alpha_j^k)}{\exp(\alpha_i^k)}=\exp(\alpha_j^k-\alpha_i^k)\geq \exp((\alpha_j^k-\alpha_i^k)/t)=\frac{\exp(\alpha_j^k/t)}{\exp(\alpha_i^k/t)}$ for $t\geq 1$. Therefore
\begin{align*}
    \frac{\exp(\alpha_j^{k+1})}{\exp(\alpha_i^{k+1})} \geq \frac{\exp(\alpha_j^k/t)}{\exp(\alpha_i^k/t)}\left(\frac{\exp(\alpha_i^k/t)+c}{\exp(\alpha_j^k/t)+c}\right)\geq 1
\end{align*}
for any $c\geq 0$. Thus $\alpha_i^{k+1}\leq \alpha_j^{k+1}$, implying $a_i^{k+1}\leq a_j^{k+1}$.
\end{proof}
Therefore, this top-$k$ relaxation is \textit{consistent} in the sense that the indices with the top $k$ weights in $w$ are also the top $k$ values in the relaxed $k$-hot vector for many reasonable $t$. In the limit as $t\rightarrow 0$ the consistency property holds for the exact $k$-hot vector, but the relaxation is not consistent in general for $0<t<1$. To see a concrete example of the loss of consistency at lower $t$, take $\mathbf{\hat{r}}=[e, e^2]$, $k=2$, and $t=0.4$. The output on this input is $[1.05, 0.95]$, which has inconsistent ordering. Consistency at higher temperatures is important when considering the bias-variance tradeoff in the gradients produced by the approximation. While higher temperatures allow for lower variance gradients, the bias introduced may cause the model to optimize something far from the true objective. Consistency in ordering suggests that the top-$k$ information is not lost at higher temperatures.

Note that any top-$k$ relaxation can be used in Algorithm \ref{diffsubsample}. For example, \cite{grover2018stochastic} give a relaxation of the sorting procedure through a relaxed permutation matrix $P$. To produce a relaxed $k$-hot vector in this framework, we can take $\mathbf{a}=\sum_{j=1}^k P_j$ where $P_j$ is the $j$-th row of $P$. Similarly, when the associated temperature parameter approaches 0, $P$ approaches the exact permutation matrix so that the relaxed $k$-hot vector $\mathbf{a}$ also approaches the exact top-$k$ $k$-hot vector.

\section{Experiments}
\begin{figure}
  \centering
    \includegraphics[width=0.382\textwidth]{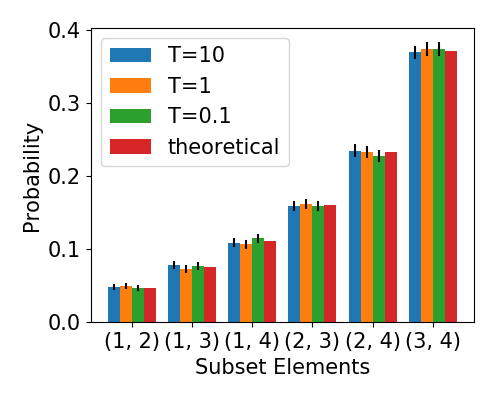}
\caption{Empirical subset distributions on a toy subset distribution.}
\end{figure}
\subsection{Synthetic Experiments}
We check that generating samples from Algorithm \ref{diffsubsample} results in samples approximately from $p(S | \mathbf{w})$. We define a subset distribution using weights $\mathbf{w}=[0.1, 0.2, 0.3, 0.4]$ and take subset size $k=2$. Using Algorithm \ref{diffsubsample} and the top-$k$ relaxation from \cite{NN3}, we sample 10000 relaxed $k$-hot samples for each temperature in $\{0.1, 1, 10\}$ and take the the top-$k$ values in the relaxed $k$-hot vector as the ``chosen'' subset. 
We plot the empirical histogram of subset occurrences and compare with the true probabilities from the subset distribution, with 95\% confidence intervals. The relaxation produces subset samples with empirical distribution within 0.016 in total variation distance of $p(S|\mathbf{w})$ for all $t$.
This agrees with Theorem~\ref{consistency}, which states that even for higher temperatures, taking the top-$k$ values in the relaxed $k$-hot vector should produce true samples from (\ref{subsetdist2}).

\subsection{Model Explanations}
\begin{figure}
\centering
\noindent\fbox{%
    \parbox{0.468\textwidth}{%
    \footnotesize
i just \hl{saw} this at the - film \hl{festival} it was the most \hl{wonderful} movie the \hl{best} i've seen in quite a while the - character of is an open - and as - in the film \hl{love} filled person portrayed by the beautiful - the - music is - and - a - to the soul each character \hl{seems} to be - in some way and their - w conflicts make for a \hl{great} story the tale told by a k - as the old man was most - done i wanted it to go on and on that - \hl{seemed} to remember his place throughout added get power to the story a \hl{refreshing} change to the - headed plots of many modern writers all and all an \hl{excellent} film go see
\normalsize
    }%
}
\caption{An example of an IMDB review where our explainer model predicts the $k=10$ words and the sentiment model outputs a similar prediction (positive) when only taking the 10 words as input. Words not in the vocabulary are replaced with dashes.}
\label{reviewexample}
\end{figure}
\begin{figure}
\centering
\noindent\fbox{%
    \parbox{0.46\textwidth}{%
    \footnotesize
\hl{read} the \hl{book} forget the \hl{movie}
\normalsize
    }%
}
\caption{A review where the L2X explainer model \protect\cite{chen18L2X} can select 10 words (whole review) but subsamples, reducing the sentiment model's confidence from 0.93 to 0.52 when taking the subset of words as input. Our explainer returns the whole review.}
\label{l2xsubsample}
\end{figure}
We follow the L2X model \cite{chen18L2X} and set up the problem of explaining instance-wise model outputs by training an auxiliary \textit{explainer} model to output the $k$ features with the highest mutual information with the model output.
For example, given a movie review and a sentiment model, we want to select up to $k$ words from the review that best explains the sentiment model's output on the particular review (see Figure \ref{reviewexample}).
Since optimizing the mutual information is intractable,
L2X optimizes a variational lower bound instead:
\begin{align}
    \max_{\mathcal{E},q} \mathbb{E}[\log q(X_S)] \text{ \enspace s.t. } S\sim \mathcal{E}(X)
\end{align}
where $\mathcal{E}:\mathbb{R}^d\rightarrow \mathcal{P}_k$ is an explainer model, parameterized as a neural network, mapping from an input to the space of all $k$-hot vectors $S$.
The approximating distribution is $q(X_S)$ where $q$ is a neural network and $X_S=S\odot X \in \mathbb{R}^d$ is $X$ with the elements not corresponding to $S$ zeroed out. 
L2X approximates the subset sampling procedure by sampling $k$ independent times from a Concrete distribution \cite{maddison2017concrete} and taking the elementwise maximum. Since each independent Concrete sample is a relaxed 1-hot vector, the L2X explainer model may suffer by independently sampling the same feature many times, resulting in selecting less than $k$ features. Our model differs by replacing this sampling procedure with Algorithm \ref{diffsubsample}. Figure \ref{l2xsubsample} shows an instance where the L2X explainer model chooses an ambiguous subset of the review even when the review is shorter than the maximum number of words to be selected ($k$), which reduces the sentiment model's confidence significantly 
(0.93 to 0.52). Our model selects the entire review in this case, so the sentiment model's output is not affected.

We test our results on the Large Movie Review Dataset (IMDB) for sentiment classification \cite{maasIMDB}, where we select the most important words or sentences that contribute to the sentiment prediction for the review. The original model for word-based sentiment classification (IMDB-word) is a convolutional neural network \cite{kim2014convolutional}, while the original model for sentence-based sentiment prediction is a hierarchical LSTM (IMDB-sent) \cite{LiHierarchicalLSTM}. The explainer and variational distribution models are CNNs with the same architectures as in L2X \cite{chen18L2X}. 
Following L2X, we use $k=10$ for IMDB-word and $k=1$ sentences for IMDB-sent. At test time, all explainer models deterministically choose subsets based on the highest weights instead of sampling. We evaluate using \textit{post-hoc accuracy}, which is the proportion of examples where the original model evaluated on masked features $X_S$ matches the model on unmasked $X$. We use cross validation to choose temperatures $t\in \{0.1, 0.5, 1, 2, 5\}$ according to the validation loss. Our model (RelaxSubSample) improves upon L2X by up to 1\% by only changing the sampling procedure (Table~\ref{explanation}).

\begin{table}
\centering
\resizebox{0.32\textwidth}{!} {
\begin{tabular}{lrr}  
\toprule
Model    & IMDB-word & IMDB-sent \\
\midrule
L2X   & $90.7\pm 0.004$ & $82.9 \pm 0.005$    \\
RelaxSubSample    & $\mathbf{91.7\pm 0.003}$ & $\mathbf{83.2 \pm 0.004}$    \\
\bottomrule
\end{tabular}
}
\caption{Post-hoc accuracy (\%, 95\% interval) on explaining sentiment predictions on the IMDB Large Movie Review Dataset. L2X refers to the model from \protect\cite{chen18L2X} while RelaxSubSample is our method.}
\label{explanation}
\end{table}

\subsection{Stochastic K-Nearest Neighbors}

\begin{table}
\centering
\resizebox{0.39\textwidth}{!} {
\begin{tabular}{lrrr}  
\toprule
Model & MNIST & Fashion-MNIST & CIFAR-10 \\
\midrule
Stochastic NeuralSort  & \textbf{99.4} & 93.4 & 89.5   \\
RelaxSubSample & 99.3 & \textbf{93.6} & \textbf{90.1} \\
CNN (no kNN) & 99.4 & 93.4 & 95.1\\
\bottomrule
\end{tabular}
}
\caption{Classification test accuracy (\%) of deep stochastic k-nearest neighbors using the NeuralSort relaxation \protect\cite{grover2018stochastic} and our method (RelaxSubSample).}
\label{fig:knntable}
\end{table}

\begin{table}
\centering
\resizebox{0.34\textwidth}{!} {
\begin{tabular}{lrrr}  
\toprule
Model    & $m=100$ & $m=1000$ & $m=5000$ \\
\midrule
NeuralSort   & \textbf{0.010s} & 0.073s & 3.694s \\
RelaxSubSample  & \textbf{0.010s} & \textbf{0.028s} & \textbf{0.110s}\\
\bottomrule
\end{tabular}
}
\caption{Forward pass average runtimes (100 trials) for a small CNN selecting $k=5$ neighbors from $m$ candidates using the NeuralSort relaxation \protect\cite{grover2018stochastic} and our method (RelaxSubSample). Results were obtained from a Titan Xp GPU.}
\label{fig:knntimings}
\end{table}

We give a stochastic k-NN algorithm where we use deep features tuned end-to-end for computing neighbors. 
We follow the setup of \citeauthor{grover2018stochastic}, using $m=100$ randomly sampled neighbor candidates and the same loss. We defer details, including architecture, to \citeauthor{grover2018stochastic}.

We compare to NeuralSort, which implements kNN using a relaxation of the sorting operator \cite{grover2018stochastic}.
We fix $k=9$ nearest neighbors to choose from $m$ candidates and search over temperatures $t=\{0.1, 1, 5, 16, 64\}$ using the validation set, whereas NeuralSort searches over both $k$ and $t$.
RelaxSubSample approaches the accuracy of a CNN trained using the standard cross entropy loss on MNIST (99.3\% vs. 99.4\%) and increases accuracy by 0.6\% over the NeuralSort implementation on CIFAR-10 (Table~\ref{fig:knntable}). 

Note that NeuralSort implements a relaxation to the sorting procedure, while in kNN we only require the top-$k$ elements. We use the top-$k$ relaxation from \cite{NN3}, which computes $k$ softmaxes for a runtime and storage of $O(km)$. NeuralSort requires $O(m^2)$ time and storage as it produces a $m \times m$ permutation matrix for each input. 
Table~\ref{fig:knntimings} shows forward pass runtimes of a CNN using our method and NeuralSort for different values of $m$ and $k=5$ neighbors. While runtimes for small $m$ are comparable, our method scales much better for larger $m$ (Table~\ref{fig:knntimings}).

\subsection{Stochastic Neighbor Embeddings}
\begin{table}[tb]
\centering
\resizebox{0.36\textwidth}{!} {
\begin{tabular}{lrrrr}  
\toprule
Model & d & MNIST & 20 Newsgroups \\
\midrule
Par. t-SNE ($\alpha=1$)  & 2  & 0.926 & 0.720   \\
RSS-SNE (no pretrain) & 2 & \textbf{0.929} & \textbf{0.763}\\
RSS-SNE (pretrain) & 2 &  \textbf{0.947} & \textbf{0.764} \\
\midrule
Par. t-SNE ($\alpha=1$)   & 10 & 0.983 & 0.854   \\
RSS-SNE (no pretrain) & 10 & \textbf{0.998} & \textbf{0.912}\\
RSS-SNE (pretrain) & 10 &\textbf{0.999} & \textbf{0.905}\\
\midrule
Par. t-SNE ($\alpha=1$)  & 30 & 0.983 & 0.866   \\
RSS-SNE (no pretrain) & 30 & \textbf{0.999} & \textbf{0.929}\\
RSS-SNE (pretrain) & 30 & \textbf{0.999} & \textbf{0.965}\\
\bottomrule
\end{tabular}
}
\caption{Trustworthiness(12) of low dimensional embeddings of size $d\in\{2,10,30\}$ for parametric t-SNE (Par. t-SNE) and RelaxSubSample SNE (RSS-SNE) on MNIST and 20 Newsgroups. Pretrain refers to layer-wise pretraining using autoencoders.
}
\label{fig:tsnetabletrustworthiness}
\end{table}

\begin{table}[tb]
\centering
\resizebox{0.36\textwidth}{!} {
\begin{tabular}{lrrrr}  
\toprule
Model    & d & MNIST & 20 Newsgroups \\
\midrule
Par. t-SNE ($\alpha=1$)  & 2  & 9.90 & 34.30   \\
RSS-SNE (no pretrain) & 2 & 11.80 & 36.80\\
RSS-SNE (pretrain) & 2 &  \textbf{8.31} & 35.11\\
\midrule
Par. t-SNE ($\alpha=1$)   & 10 & 5.38 & 24.40   \\
RSS-SNE (no pretrain) & 10 & \textbf{4.97} & 29.39\\
RSS-SNE (pretrain) & 10 &\textbf{4.56} & 28.50\\
\midrule
Par. t-SNE ($\alpha=1$)  & 30 & 5.41 & 24.88   \\
RSS-SNE (no pretrain) & 30 & \textbf{3.51} & 29.39\\
RSS-SNE (pretrain) & 30 & \textbf{3.05} & 28.90\\
\bottomrule
\end{tabular}
}
\caption{Test errors (\%) of 1-NN classifiers trained on low dimensional embeddings of size $d\in\{2,10,30\}$ generated by parametric t-SNE (Par. t-SNE) and our model (RSS-SNE). }
\label{fig:tsnetable1nn}
\end{table}

We consider the problem of learning a parametric embedding which maps a high-dimensional space into a lower-dimensional space while preserving local neighbor structure. This problem is addressed by parametric t-SNE \cite{pmlr-v5-maaten09a}, which represents pairwise densities of datapoints in the high-dimensional space as symmetrized Gaussians with variances tuned so that the perplexity of each point is equal. The pairwise densities in the low-dimensional space are modeled by Student-t distributions to address the \textit{crowding problem}, a result of the volume shrinkage from the high to low-dimensional spaces. Student-t distributions have heavier tails, allowing for distant points in the high-dimensional space to be modeled as far apart in the low-dimensional space. The objective is to minimize the KL divergence between the pairwise distributions in the two spaces.

\begin{figure}[tb]
  \centering
  \subfigure[MNIST]{
        \includegraphics[width=0.31\textwidth]{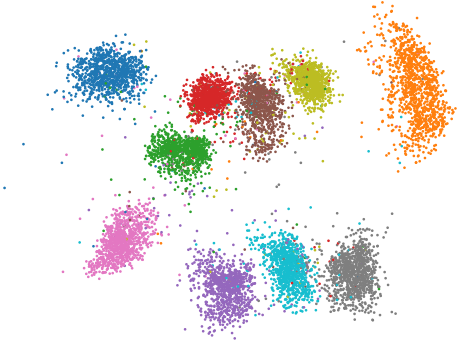}
    }
    \subfigure[20 Newsgroups]{
        \includegraphics[width=0.31\textwidth]{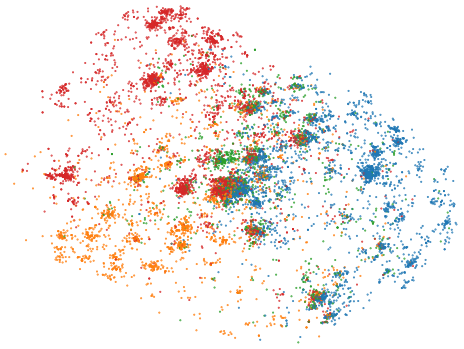}
    }
\caption{2D embeddings generated by our model (with pretrain) on 10000 MNIST datapoints and 15000 20 Newsgroups datapoints. Colors represent different classes. (Best in color.)}
\end{figure}

We propose to learn such a mapping without the use of Student-t distributions. Our insight is that the goal of preserving local structure can be achieved by preserving the neighbor \emph{rankings}, which is insensitive to scaling of distances. In our RelaxSubSample-based stochastic neighbor embedding (RSS-SNE), we aim to preserve the distribution of $k$ neighbors around each point. We define the neighbor distributions as follows. Let $X=\{x_1,\dots, x_n\}$ be the training data. Let $w(i,j)=\exp(-\|x_i-x_j\|_2^2)$ be exponentiated negative pairwise squared distances. For datapoint $x_i$, let $\mathbf{w}(i)\in \mathbb{R}^{n-1}$ be the pairwise distances from $x_i$ to other points. Then we model the neighbor distribution for $x_i$ as $p(S_{wrs} | \mathbf{w}(i))$ as in (\ref{subsetdist}). Note that we sample sub-sequences because we want to preserve neighbor rankings. Let $h$ be our parametric embedding function. Similarly, letting $\hat{w}(i,j)=\exp(-\|h(x_i)-h(x_j)\|_2^2)$ and the pairwise distances involving $h(x_i)$ be $\mathbf{\hat{w}}(i)$, the neighbor distribution in the low dimensional space is $p(S_{wrs} | \mathbf{\hat{w}}(i))$. We aim to match these distributions by comparing the neighbor \emph{samples} to avoid the crowding problem. 
For each $x_i$, let a sample from $p(S_{wrs} | \mathbf{w}(i))$ be $[\mathbf{e}^{i_1},\dots,\mathbf{e}^{i_k}]$ where $\mathbf{e}^{i_j}$ are 1-hot vectors corresponding to a selected neighbor $x_{i_j}$. 
Let a relaxed sample from $p(S_{wrs} | \mathbf{\hat{w}}(i))$ be $[\mathbf{a}^{i_1},\dots,\mathbf{a}^{i_k}]$, $k$ intermediate relaxed 1-hot outputs of the top-$k$ relaxation from \cite{NN3}. 
We minimize the objective
\begin{align}
    \frac{1}{n}\sum_{i=1}^n\sum_{j=1}^k \frac{1}{e^{j-1}} <\mathbf{e}^{i_j},-\log(\mathbf{a}^{i_j})>
\end{align}
where we aim to match samples from the neighbor distributions in the two spaces, putting more weight on matching closer neighbors.
While we can directly match the neighbor distributions $p(S_{wrs} | \mathbf{w(i)})$ and $p(S_{wrs} | \mathbf{\hat{w}}(i))$ without sampling, they are defined in terms of pairwise distances that cannot be matched due to the crowding problem. We find that sampling from both distributions is necessary to keep the loss agnostic to scaling of distances in the different spaces.

We compare with parametric t-SNE \cite{pmlr-v5-maaten09a} on the MNIST \cite{lecun-mnisthandwrittendigit-2010} and a small version of the 20 Newsgroups dataset \cite{20newsgroups}\footnote{Data at \protect\url{https://cs.nyu.edu/~roweis/data/20news_w100.mat}}.
Following \citeauthor{pmlr-v5-maaten09a}, we compare the trustworthiness and performance of 1-NN classifiers of the low dimensional embeddings on the \emph{test} set. Trustworthiness~\cite{Venna06visualizinggene} measures preservation of local structure in the low dimensional embedding and is defined as 
\[T(k)=1- \frac{2}{nk(2n-3k-1)}\sum_{i=1}^{n}\sum_{j\in N_i^{(k)}}\max(r(i,j)-k, 0) \]
where $r(i,j)$ is the rank of datapoint $j$ according to distances between datapoint $i$ and other datapoints in the high-dimensional space, and $N_i^{(k)}$ is the set of $k$ nearest neighbors in the low dimensional space. Trustworthiness decreases when a datapoint is in the $k$ nearest neighbors in the low dimensional space but not the original space. As in \citeauthor{pmlr-v5-maaten09a}, we use $T(12)$, comparing 12 nearest neighbors. 

We use the same feedforward networks as in \citeauthor{pmlr-v5-maaten09a} as embedding functions. For all experiments, we set $t=0.1$ and train for 200 epochs with a batch size of 1000, choosing the model with the best training loss. We sample neighbors only within each training batch.
We find that $k=1$ is sufficient to learn the local structure. For $k>1$, we observe a trade-off where 1-NN accuracy decreases but trustworthiness either stays the same or increases. It was important to add a small bias ($1\mathrm{e}-8$) to the relaxed $k$-hot vectors for better optimization. We compare two versions of RSS-SNE, one trained from scratch and another using layerwise pretraining. We pretrain layer $l$ by treating layers $1,\dots,l$ as an encoder and adding a 1 layer decoder to the original space; then we optimize the MSE autoencoder objective for 10 epochs. Note that the original parametric t-SNE used a similar layerwise pretraining scheme using RBMs. RSS-SNE models consistently have higher trustworthiness and competititive 1-NN test errors when compared to parametric t-SNE (Tables~\ref{fig:tsnetabletrustworthiness}, \ref{fig:tsnetable1nn}). Since trustworthiness compares 12 nearest neighbors, this suggests that our embedding has better overall structure as opposed to focusing on the immediate neighbor.

\section{Conclusion}
We present an algorithm for relaxing samples from a distribution over subsets such that the procedure can be included in deep models trained with backpropagation. We use the algorithm as a drop-in replacement in tasks requiring subset sampling to boost performance. Our algorithm has the potential to improve any task requiring subset sampling by tuning the model end-to-end with the subset procedure in mind.

\section*{Acknowledgements}
This research was supported by NSF (\#1651565, \#1733686), ONR, AFOSR (FA9550-
19-1-0024), and Bloomberg. We are thankful to Aditya Grover for helpful comments.

\bibliographystyle{named}
\bibliography{main}
\end{document}